\newtheorem{remark}{Remark}
\newtheorem{theorem}{Theorem}
\newtheorem{lemma}{Lemma}
\newtheorem{assumption}{Assumption}
\newtheorem{problem}{Problem}
\newtheorem{definition}{Definition}
\newtheorem{proposition}{Proposition}
\begin{document}

\title{{Model Selection for Inverse Reinforcement Learning\\ via Structural Risk Minimization}}
\author{Chendi Qu$^\dag$, Jianping He$^\dag$, Xiaoming Duan$^\dag$, and Jiming Chen$^\ddag$ 
	\thanks{
	 $^\dag$ the Dept. of Automation, Shanghai Jiao Tong University, Shanghai, China. E-mail address: \{qucd21, jphe, xduan\}@sjtu.edu.cn. $^\ddag$ the State Key Laboratory of Industrial Control Technology, Zhejiang University, Hangzhou, China. E-mail address: cjm@zju.edu.cn.
	}
}

\maketitle

\begin{abstract}%
Inverse reinforcement learning (IRL) usually assumes the reward function model is pre-specified as a weighted sum of features and estimates the weighting parameters only.
However, how to select features and determine a proper reward model is nontrivial and experience-dependent. A simplistic model is less likely to contain the ideal reward function, while a model with high complexity leads to substantial computation cost and potential overfitting.
This paper addresses this trade-off in the model selection for IRL problems by introducing the structural risk minimization (SRM) framework from statistical learning. SRM selects an optimal reward function class from a hypothesis set minimizing both estimation error and model complexity. To formulate an SRM scheme for IRL, we estimate the policy gradient from given demonstration as the empirical risk, and establish the upper bound of Rademacher complexity as the model penalty of hypothesis function classes. The SRM learning guarantee is further presented. In particular, we provide the explicit form for the linear weighted sum setting. Simulations demonstrate the performance and efficiency of our algorithm.

\end{abstract}

\section{Introduction}
The concept of learning from demonstration (LfD) has received substantial attention in recent years \cite{ravichandar2020recent}. LfD makes robots programming possible for non-experts and has been applied to various fields including autonomous driving \cite{kuderer2015learning}, manufacturing \cite{kent2016construction} and human-robot interaction \cite{maeda2017probabilistic}. Given the demonstration sampled from expert's outputs, one mainstream category of LfD algorithms is to learn the policy directly from the observations to action \cite{rahmatizadeh2018vision,torabi2018behavioral}, known as end-to-end learning. However, these methods usually require a large amount of data and do not generalize well. Another class named inverse reinforcement learning (IRL) conducts a two-stage algorithm, which is to infer the reward function first and then solve the forward problem with the learned objective to obtain the target policy \cite{arora2021survey, adams2022survey}. 
Since the objective is learned, IRL is able to imitate expert's policy as the environment and initial state change. 


Extensive effort went into solving the IRL problem \cite{abbeel2004apprenticeship, ramachandran2007bayesian, ziebart2010modeling, pirotta2016inverse, ho2016generative,ashwood2022dynamic}.
Standard IRL algorithms assume a predefined model for the reward function, which is usually described as a feature-based linear combination $r(s,a)=\sum_{i=1}^q \omega_i^T \phi_i(s,a) = \mathbf{\omega}^T \mathbf{\phi}(s,a)$, and only estimate the parameter $\mathbf{\omega}$. 
However, selecting a proper model (e.g. feature model $\phi_i(\cdot, \cdot)$) in IRL is a nontrivial problem. \cite{baimukashev2024automated} discusses a feature selection in IRL given polynomial form while they do not consider the model complexity and their features only involve the states without inputs. \cite{haug2018teaching} also considers the case when features are not consistent with the expert and decreases the difference by updating the learner's worldview. In this paper, we take the function model complexity into consideration and tackle a trade-off here. Notice that when the feature model is chosen to be too simple, the ideal reward function may not be contained in the class, resulting in a disparity between the learned policy and the authentic one due to the large approximation error. Conversely, if we search in a complex and rich function class, it requires more demonstration data to minimize the estimation error. And the computation can be extremely expensive, as many IRL algorithms involve a iterative weighting parameter learning and forward RL process \cite{ziebart2008maximum}. A complex reward function increases the computational cost in both IRL and forward RL.
Therefore, selecting an appropriate model for IRL is a crucial consideration.

Note that the previous described model selection problem is firstly proposed in the statistical
learning \cite{shawe1998structural,koltchinskii2001rademacher}. In a classification task, the choice of the hypothesis function class can be determined through solving a trade-off between the estimation error and approximation error. The estimation error is described by an empirical risk while the approximation error can be bounded by the Rademacher complexity \cite{mohri2018found} of the function class. Based on these, the structural risk minimization (SRM) is to find the optimal function class that minimizes the both two error terms. Recently, SRM scheme has been applied to solve traditional control problems. \cite{massucci2020structural} estimates the number of modes in a switched system with SRM. \cite{stamouli2023structural} tackles SRM problems for nonlinear system identification over hierarchies of model class including norm-constrained reproducing kernel Hilbert space and neural network.

Motivated by the above discussion, we introduce the SRM scheme to solve the model selection for IRL with unknown features. The challenge lies in establishing the empirical risk and the corresponding upper bound for the complexity of the hypothesis model.
The main contributions are summarized as follows.
\begin{itemize}
\item We leverage the estimated policy gradient from expert's demonstration trajectories to derive the empirical risk in IRL. And we establish the upper bound complexity measure on this gradient-based risk given the the Rademacher complexity of hypothesis function models. 
\item  We propose an SRM scheme for IRL by minimizing a weighted combination of the empirical risk and the complexity bound, which determines the optimal model for the reward and learns its corresponding parameters. The learning guarantee of the SRM solution is provided. Particularly, we present an explicit complexity bound and algorithm flow for the common reward setting of a linear weighted sum of features. 
\item  Numerical simulations on a linear quadratic regulator (LQR) control are conducted to show the efficiency and performance of our SRM scheme.
\end{itemize}

The remainder of the paper is organized as follows. Section \ref{ppf_sec} describes the problem of interest and introduces related preliminaries. Section \ref{srm_sec} presents the SRM scheme design for IRL with unknwon reward model. Numerical experiments and simulation results are shown
in Section \ref{sim_sec}, followed by the conclusion in Section \ref{conc_sec}.

\section{Preliminaries and Problem of Interest}\label{ppf_sec}
\subsection{IRL with Unknown Reward Function Model}

Consider a Markov decision process (MDP) defined by a tuple $(\mathcal{S}, \mathcal{A}, p, \gamma, r)$, where $\mathcal{S} \in \mathbb{R}^{|\mathcal{S}|}$ is the state space, $\mathcal{A}\in \mathbb{R}^{|\mathcal{A}|}$ is the action space. The environment dynamics are characterized by the state transition model $p$, where $p(s'|s,a) \in \{0,1\}$ denotes the probability of the transition from state $s$ to $s'$ under action $a$. $\gamma \in [0,1)$ is the discount factor and $r : (s,a) \mapsto \mathbb{R}$ is the reward function. 

The goal of IRL is to identify the objective function $r$ based on a demonstration $\mathcal{T}$ generated by an expert following the optimal policy $\pi: \mathcal{S} \xrightarrow{} \mathcal{P}(\mathcal{A})$.
Different from  standard IRL studies where the reward function model is pre-specified, this paper considers a more practical scenario, where the model and parameters both are unknown. 
The problem is described as follows.
\begin{problem}\label{main_pro}
Suppose we have a demonstration $\mathcal{T}$ and a finite countable set of hypothesis reward function classes $\{\mathcal{F}_j\}_{j=1}^C$. IRL with unknown reward model is to select an optimal class index $j^*$ and identify the optimal reward function $r^* \in \mathcal{F}_{j^*}$.
\end{problem}

However, properly defining the optimality in Problem \ref{main_pro} is nontrivial.
In a rich function class, we are more likely to find an ideal function that explains the objective well, while searching in such a complex space leads to a large computation cost and requires more data. Therefore, to obtain an optimal model balancing this trade-off, we introduce the SRM scheme in statistical learning into our problem. 
We first provide the following two preliminaries.

\subsection{Policy Gradient Minimization}

The reward function of a $\gamma$-discounted reinforcement learning task is defined as
\begin{equation}
J(\pi,r)= \int_{\mathcal{S}} d^{\pi}(s) \int_{\mathcal{A}} \pi(a;s,\theta) Q^{\pi}(s,a) {d}a {d}s,
\end{equation}
where $d^{\pi}(s)$ is the stationary distribution of state $s$ under policy $\pi$, and $Q^{\pi}(s,a) = \mathbb{E}\{\sum_{k=1}^\infty \gamma^{k-1} r_{t+k} | s_t=s,a_t=a, \pi\}$ \cite{sutton1999policy}. In policy gradient approaches \cite{grondman2012survey}, the policy is required to be stochastic, which can be achieved by adding zero-mean Gaussian noise, or we need to obtain the transition model $p$. Assuming $\pi$ is differentiable with respect to its parameter $\theta$, we can calculate the gradient of the objective function with respect to $\theta$ as
\begin{equation}\label{grad}
\frac{\partial J(\pi,r)}{\partial \theta} = \int_{\mathcal{S}} d^{\pi}(s) \int_{\mathcal{A}} \frac{\partial \pi(a;s,\theta)}{\partial \theta} Q^{\pi}(s,a) {d}a {d}s.
\end{equation}
For the gradient in \eqref{grad}, we have the following proposition.
\begin{proposition}[Optimality Necessary Condition]\label{zero_grad}
If the expert policy $\pi^*$ is the optimal policy with respect to the designed reward $r(s,a)$, the gradient $\frac{\partial J(\pi^*,r)}{\partial \theta}$ will equal to $\textbf{0}$, which means $\pi^*$ is a stationary point of $J(\pi,\omega)$.
\end{proposition}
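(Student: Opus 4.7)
The plan is to treat this as a straightforward first-order necessary condition for an unconstrained optimum in parameter space, combined with the policy gradient formula already given in equation \eqref{grad}. Since the reward $r$ is fixed and $\pi^*$ is declared optimal with respect to it, $\pi^*$ achieves the maximum of the functional $J(\,\cdot\,,r)$ over the admissible family of parameterized policies $\{\pi(\cdot\,;\,\cdot,\theta):\theta\in\Theta\}$. Let $\theta^*$ denote a parameter realizing $\pi^*=\pi(\,\cdot\,;\,\cdot,\theta^*)$.

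First I would state the assumptions explicitly: $\pi(a;s,\theta)$ is differentiable in $\theta$ (already assumed in the text preceding the policy gradient derivation), $\Theta$ is an open set in some Euclidean parameter space so that $\theta^*$ is an interior optimizer, and the action-value $Q^{\pi}$ together with the stationary distribution $d^{\pi}$ are well-defined so that $J$ itself is differentiable in $\theta$. Under these conditions, $\theta^*\in\arg\max_{\theta\in\Theta}J(\pi(\,\cdot\,;\,\cdot,\theta),r)$, and Fermat's first-order necessary condition for an interior maximum of a differentiable function gives $\nabla_\theta J(\pi,r)\big|_{\theta=\theta^*}=\mathbf{0}$.

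Next I would identify this gradient with the expression in \eqref{grad}, which follows from the standard policy gradient theorem and has already been derived in the preceding paragraph. Substituting $\theta=\theta^*$, i.e., $\pi=\pi^*$, into \eqref{grad} and invoking the vanishing of the gradient at the optimizer yields
\begin{equation*}
\frac{\partial J(\pi^*,r)}{\partial\theta}=\int_{\mathcal{S}}d^{\pi^*}(s)\int_{\mathcal{A}}\frac{\partial\pi(a;s,\theta)}{\partial\theta}\bigg|_{\theta=\theta^*}Q^{\pi^*}(s,a)\,\mathrm{d}a\,\mathrm{d}s=\mathbf{0},
\end{equation*}
which is precisely the claim that $\pi^*$ is a stationary point of $J(\pi,r)$.

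The argument is essentially a one-line application of Fermat's theorem, so there is no deep obstacle. The only subtlety worth flagging explicitly is the tacit hypothesis that the optimizer lies in the interior of the parameter domain and that the gradient exchange used to obtain \eqref{grad} is valid (dominated convergence or a smoothness assumption on $\pi$, $p$, and $r$ together with boundedness of $Q^{\pi}$). If the paper wishes to allow boundary optima or non-smooth parameterizations, I would weaken the conclusion to a Karush–Kuhn–Tucker-type stationarity; otherwise the statement as written is immediate.
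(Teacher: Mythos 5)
Your argument is correct and is exactly the standard one: Fermat's first-order necessary condition at an interior maximizer $\theta^*$ of the differentiable map $\theta \mapsto J(\pi(\cdot\,;\cdot,\theta),r)$, combined with the policy gradient expression \eqref{grad}. For reference, the paper supplies no proof of this proposition at all---it is asserted as a known consequence of the policy gradient theorem---so there is no alternative argument to contrast with; the hypotheses you flag explicitly (interiority of $\theta^*$ in $\Theta$, differentiability of $J$ in $\theta$ and the validity of differentiating under the integral, and the step from optimality of $\pi^*$ over all policies to optimality over the parameterized family once $\pi^*=\pi(\cdot\,;\cdot,\theta^*)$) are precisely the assumptions the paper leaves tacit, so your version is, if anything, more careful than what the paper relies on.
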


\subsection{Rademacher Complexity}
Rademacher complexity  measures the capacity of a hypothesis class of real-valued functions. Denote $\mathcal{X}, \mathcal{Y}$ as the input and output spaces in a regression problem. $\mathcal{F}$ is a hypothesis function class, where $f:\mathcal{X} \xrightarrow{} \mathcal{Y}, f\in \mathcal{F}$. For an arbitrary loss function class $\mathcal{L}^{reg}$ associated with $\mathcal{F}$ mapping from $\mathcal{Z} = \mathcal{X} \times \mathcal{Y}$ to $\mathbb{R}$, we have
\[
\mathcal{L}^{reg}\!=\!\{l^{reg}(z): (x,y) \mapsto l^{reg}(f(x),y), z:=(x,y), f\in \mathcal{F}\}.
\]
We then provide the following definition.

\begin{definition}[Rademacher Complexity \cite{bartlett2002rademacher}]\label{rade}
Given $\mathcal{L}^{reg}$ as a function class mapping from $\mathcal{Z}$ to $\mathbb{R}$, and $S=\{z_i\}_{i=1}^m$ as a sequence of $m$ samples from $\mathcal{Z}$, the empirical Rademacher complexity of $\mathcal{L}^{reg}$ with respect to $S$ is defined as
\begin{equation}
\hat{\mathfrak{R}}_S(\mathcal{L}^{reg}) = \mathbb{E}_{\sigma} \left[ \sup_{l^{reg}\in \mathcal{L}^{reg}} \frac{1}{m} \sum_{i=1}^m \sigma_i l^{reg}(z_i) \right],
\end{equation}
where $\sigma = (\sigma_i)_{i=1}^m$ are independent uniform random variables distributed in $\{-1,1\}$. $\sigma_i$ is called Rademacher variable. The Rademacher complexity of $\mathcal{L}^{reg}$ is
$\mathfrak{R}_m(\mathcal{L}^{reg}) = \mathbb{E}_{S} \hat{\mathfrak{R}}_S(\mathcal{L}^{reg})$.
\end{definition}

The following theorem shows that Rademacher complexity provides an upper bound on the expected risk (also generalization error). If the complexity of the model $\mathcal{L}^{reg}$ is high, the generalization error may be large and it is more difficult to generalize to unseen data, making it prone to overfitting. As the sample number $m$ increases, the bound becomes tighter.
\begin{theorem}[Theorem 3.3 in \cite{mohri2018found}]\label{rade_def}
Let $\mathcal{L}^{reg}$ be a function class mapping from $\mathcal{Z}$ to $[0,B]$ and $S=\{z_i\}_{i=1}^m$ be a sequence of $m$ i.i.d. samples from $\mathcal{Z}$. Then for any $\delta>0$, with at least $1-\delta$ probability
\begin{equation}
\mathbb{E}[l^{reg}(z)] \leq \frac{1}{m} \sum_{i=1}^m l^{reg}(z_i) + 2 \hat{\mathfrak{R}}_S(\mathcal{L}^{reg}) + 3 B \sqrt{\frac{\log \frac{2}{\delta}}{2m}}
\end{equation}
holds for all $l^{reg}\in \mathcal{L}^{reg}$.
\end{theorem}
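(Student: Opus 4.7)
The plan is to follow the classical uniform convergence argument combining McDiarmid's bounded differences inequality with a Rademacher symmetrization step. Since every $l^{reg}\in \mathcal{L}^{reg}$ takes values in $[0,B]$, perturbing a single coordinate of $S$ moves any empirical mean by at most $B/m$; this stability estimate will be the workhorse of both concentration steps below.

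First I would introduce the supremum deviation $\Phi(S)=\sup_{l^{reg}\in \mathcal{L}^{reg}}\bigl(\mathbb{E}[l^{reg}(z)] - \tfrac{1}{m}\sum_{i=1}^m l^{reg}(z_i)\bigr)$ and note that replacing any $z_i$ by another point of $\mathcal{Z}$ changes $\Phi$ by at most $B/m$. Applying McDiarmid's inequality then yields, with probability at least $1-\delta/2$, the bound $\Phi(S)\leq \mathbb{E}_S[\Phi(S)] + B\sqrt{\log(2/\delta)/(2m)}$.

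Next I would symmetrize $\mathbb{E}_S[\Phi(S)]$ using an independent ghost sample $S'=\{z_i'\}_{i=1}^m$: rewriting $\mathbb{E}[l^{reg}(z)]$ as $\mathbb{E}_{S'}\bigl[\tfrac{1}{m}\sum_i l^{reg}(z_i')\bigr]$, pulling the outer expectation inside the supremum by Jensen's inequality, and then using the exchangeability of $(z_i,z_i')$ to insert Rademacher signs $\sigma_i$ without changing the joint law, produces the standard bound $\mathbb{E}_S[\Phi(S)]\leq 2\,\mathfrak{R}_m(\mathcal{L}^{reg})$, with $\mathfrak{R}_m$ defined in Definition~\ref{rade}.

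Finally I would convert the expected Rademacher complexity to its empirical version: the same $B/m$ bounded-difference property applied to $\hat{\mathfrak{R}}_S(\mathcal{L}^{reg})$ gives, again by McDiarmid, $\mathfrak{R}_m(\mathcal{L}^{reg})\leq \hat{\mathfrak{R}}_S(\mathcal{L}^{reg}) + B\sqrt{\log(2/\delta)/(2m)}$ with probability at least $1-\delta/2$. A union bound over the two concentration events, combined with the factor two from symmetrization, collects the $3B\sqrt{\log(2/\delta)/(2m)}$ remainder in the stated inequality. I expect the symmetrization step to be the main conceptual hurdle, since introducing the Rademacher signs requires both the i.i.d.\ assumption on $S$ and a careful exchange of supremum and expectation via Jensen; the two McDiarmid invocations and the final union bound are then routine bookkeeping.
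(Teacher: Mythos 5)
Your proposal is correct: this statement is quoted from Mohri et al.\ (Theorem 3.3) and the paper supplies no proof of its own, and your argument --- McDiarmid on the supremum deviation with bounded differences $B/m$, ghost-sample symmetrization via Jensen to obtain $2\mathfrak{R}_m(\mathcal{L}^{reg})$, a second McDiarmid step to pass to the empirical complexity, and a union bound yielding the $3B\sqrt{\log(2/\delta)/(2m)}$ remainder --- is exactly the standard proof given in that reference. Nothing is missing.
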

Noticing Theorem \ref{rade_def} requires $\mathcal{L}^{reg}$ to be bounded, we define the clipped version $\bar{f}$ of a function $f$ as
\begin{equation}\label{clip}
\bar{f}(x) = \left\{
\begin{array}{ll}
f(x) \cdot \frac{B}{\Vert f(x) \Vert_2}, ~ \textup{if} ~\Vert f(x) \Vert_2>B\\
f(x), ~\textup{otherwise}
\end{array}
\right.
.
\end{equation}

\section{SRM Scheme for IRL with Unknown Reward Model}\label{srm_sec}
Based on preliminaries introduced in the previous section, we now derive the SRM method for Problem \ref{main_pro}. We will first build the empirical risk minimization (ERM) based on the policy gradient estimation and the upper bound of model complexity with Rademacher complexity. Then we provide the SRM scheme, which consists of selecting an optimal reward function class index $j^* \geqslant 1$ and the ERM hypothesis $r^* \in \mathcal{F}_{j^*}$, achieving a minimization on both estimation error and the model complexity.

\subsection{ERM-IRL based on Policy Gradient}

Suppose there is an expert following the optimal policy $\pi$ with respect to a set reward function $r$. We have observed a trajectory $\tau = (s_0,a_0,s_1,\dots,s_T)$, where $s_0 \sim \mathcal{D}$ is the initial state generated from a distribution $\mathcal{D}$ on space $\mathcal{S}$.
Then the objective gradient \eqref{grad} with $\pi,r$ can be estimated through some existing methods such as REINFORCE \cite{williams1992simple}, GPOMDP \cite{baxter2001infinite} and eNAC \cite{peters2008natural}. For the simplicity of the calculation and formulation, we utilize REINFORCE and the gradient is calculated as
\begin{equation}\label{es_grad}
\nabla_{\theta} \hat{J}_{\tau}(s_0,r) = \sum_{t=1}^{T} \nabla_{\theta} \ln{\pi_{t}} \sum_{k=t}^{T} \gamma^{k-t} r(s_{k}(s_0),a_{k}(s_0)),
\end{equation}
where $\nabla_{\theta} \ln{\pi_{t}} = \frac{\partial \ln{\pi(a_t;s_t,\theta)}}{\partial \theta}$ for brevity. The policy $\pi(a_t;s_t,\theta)$ at time $t$ in trajectory $\tau$ only depends on state $s_0$. Thus the gradient $\nabla_{\theta} \hat{J}_{\tau}$ in \ref{es_grad} is a function of $s_0,r$.

When the sampling on $\mathcal{D}$ is infinite and stochastic, the estimated gradient \eqref{es_grad} goes to zero according to Proposition \ref{zero_grad}. For an IRL problem, the real reward function $r$ can be identified by minimizing the following criterion
\begin{equation}\label{real-err}
\epsilon_{\mathcal{D}}(r) := \mathbb{E}_{s_0 \sim \mathcal{D}} l( \nabla_{\theta} \hat{J}_{\tau}(s_0,r)),
\end{equation}
where $l(\cdot)$ is a loss function on the gradient $\nabla_{\theta} \hat{J}_{\tau}$ satisfying Assumption \ref{l_ass}.

\begin{assumption}\label{l_ass}
The loss function $l(\cdot)$ is Lipschitz continuous with $l(0) = 0$.
\end{assumption}
However, in the realistic situation, the distribution $\mathcal{D}$ is unknown and we can only observe a set of $M$ trajectories $\mathcal{T}=\{\tau^i\}_{i=1}^M$ with a set $\{s_0^i\}_{i=1}^M$ of initial states. Thus, we use the empirical risk for IRL problem as an estimation to \eqref{real-err} which is described by
\begin{equation}\label{erm_risk}
\begin{aligned}
\hat{\epsilon}_{\mathcal{T}} (r) := \frac{1}{M} \sum_{i=1}^M l(\nabla_{\theta} \hat{J}_{\tau^i}(s_0^i,r)).
\end{aligned}
\end{equation}
\begin{problem}[ERM-IRL Problem]\label{erm-irl}
Given an observed demonstration $\mathcal{T}$ and a function class ${\mathcal{F}}$, the ERM hypothesis reward function is obtained through
\begin{equation}\label{erm_pro}
r^{ERM}_{\mathcal{T}} = \mathop{\arg\min}\limits_{r \in \mathcal{F}} \hat{\epsilon}_{\mathcal{T}} (r).
\end{equation}
\end{problem}

\begin{assumption}
During the gradient estimation, we assume the function model of the expert policy $\pi(a;s,\theta)$ is known or pre-selected properly. The parameter $\theta$ (e.g. the feedback gain matrix $K$ in LQR) can be estimated through a maximum likelihood problem based on $\mathcal{T}$.
\end{assumption}

\subsection{SRM Scheme for IRL}
Note that ERM in Problem \ref{erm-irl} cannot deal with the trade-off regarding to the model complexity of the chosen function class $\mathcal{F}$. In this subsection, we first build the upper bound on approximation error with Rademacher complexity of $\mathcal{F}$.

For a reward function class $\mathcal{F}$ and a trajectory sample $\tau$ with $s_0$, a risk function class is defined as 
\[\mathcal{L}^{irl} = \{l^{irl}:s_0 \mapsto l(\nabla_{\theta} \hat{J}_{\tau}(s_0,r)), r\in \mathcal{F}\}\] 
based on the established risk \eqref{real-err}. The following lemma derives an upper bound on the Rademacher complexity of $\mathcal{L}^{irl}$ with that of $\mathcal{F}$.
\begin{lemma}\label{rade_l}
Given a demonstration $\mathcal{T}=\{\tau^i\}_{i=1}^M$ with a set $\{s_0^i\}_{i=1}^M$ of initial states and the reward function class $\mathcal{F}$, the Rademacher complexity of $\mathcal{L}^{irl}$ is bounded by
\begin{equation}
\hat{\mathfrak{R}}_{\mathcal{T}}(\mathcal{L}^{irl}) \leqslant L \sum_{t=1}^{T} \sum_{k=t}^{T} \gamma^{k-t} \nabla_{\theta} \Tilde{\pi}_t \cdot \hat{\mathfrak{R}}_{\mathcal{T}_k}(\mathcal{F}) := R_{\mathcal{T}}(\mathcal{F}),
\end{equation}
where $\mathcal{T}_k = \{(s^i_k,a^i_k)\}_{i=1}^M,k=1,\dots,T$ and $\nabla_{\theta} \Tilde{\pi}_t = \max_{i} \Vert \nabla_{\theta} \ln{\pi^i_{t}} \Vert$. $L$ is the Lipschitz constant of the loss function $l(\cdot)$.
\end{lemma}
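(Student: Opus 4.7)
The plan is to reduce $\hat{\mathfrak{R}}_{\mathcal{T}}(\mathcal{L}^{irl})$ to $\hat{\mathfrak{R}}_{\mathcal{T}_k}(\mathcal{F})$ in three peeling steps, each invoking a standard property of Rademacher complexity. First, I would strip the loss $l$ using the Ledoux--Talagrand contraction lemma: since $l$ is $L$-Lipschitz and $l(0)=0$ by Assumption~\ref{l_ass}, contraction yields
\begin{equation*}
\hat{\mathfrak{R}}_{\mathcal{T}}(\mathcal{L}^{irl}) \leq L \cdot \hat{\mathfrak{R}}_{\mathcal{T}}(\mathcal{J}),
\end{equation*}
where $\mathcal{J}=\{s_0 \mapsto \nabla_\theta \hat{J}_\tau(s_0,r) : r \in \mathcal{F}\}$; when $\theta$ is multi-dimensional, the contraction is applied componentwise and aggregated in the norm induced by~$l$.

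Second, I would substitute the REINFORCE estimator \eqref{es_grad} into the Rademacher sum inside $\hat{\mathfrak{R}}_{\mathcal{T}}(\mathcal{J})$ and, using sub-additivity of $\sup$ together with linearity of $\mathbb{E}_\sigma$, push the deterministic double sum $\sum_{t=1}^T \sum_{k=t}^T \gamma^{k-t}$ outside the supremum to obtain
\begin{equation*}
\hat{\mathfrak{R}}_{\mathcal{T}}(\mathcal{J}) \leq \sum_{t=1}^T \sum_{k=t}^T \gamma^{k-t} \cdot \mathbb{E}_\sigma \sup_{r \in \mathcal{F}} \frac{1}{M} \sum_{i=1}^M \sigma_i\, \nabla_\theta \ln \pi_t^i \cdot r(s_k^i,a_k^i).
\end{equation*}
For each fixed pair $(t,k)$, the weight vectors $\nabla_\theta \ln \pi_t^i$ depend only on the observed demonstration (not on~$r$). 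Bounding each weight in norm by $\nabla_\theta \tilde{\pi}_t=\max_i\|\nabla_\theta \ln\pi_t^i\|$ and invoking the Rademacher comparison principle---for scalars $|\alpha_i|\leq A$, $\mathbb{E}_\sigma \sup_f \frac{1}{M}\sum_i \sigma_i \alpha_i f(z_i) \leq A\cdot \mathbb{E}_\sigma \sup_f \frac{1}{M}\sum_i \sigma_i f(z_i)$---reduces the residual quantity to exactly $\hat{\mathfrak{R}}_{\mathcal{T}_k}(\mathcal{F})$ by Definition~\ref{rade}, which gives the claimed bound.

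The main obstacle is the third step. The weights $\nabla_\theta \ln \pi_t^i$ live in $\mathbb{R}^{\dim \theta}$ while $r(s_k^i,a_k^i)$ is scalar, so pulling out the scalar constant $\nabla_\theta \tilde{\pi}_t$ in a way that is consistent with the scalar empirical Rademacher complexity $\hat{\mathfrak{R}}_{\mathcal{T}_k}(\mathcal{F})$ requires either a careful coordinate-wise argument---treating each component of $\theta$ separately, bounding each coordinate by the norm, and re-assembling---or a vector-valued extension of the comparison principle. Steps~1 and~2 are essentially bookkeeping once contraction and sub-additivity are invoked; the real care is in justifying this vector-to-scalar reduction without introducing extra dimension-dependent slack that would spoil the stated bound.
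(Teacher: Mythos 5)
Your proposal follows essentially the same route as the paper's proof: a Lipschitz contraction step to strip off $l$, sub-additivity of the supremum plus linearity of $\mathbb{E}_\sigma$ to pull the $(t,k)$ double sum outside, and a comparison step bounding $\Vert\nabla_\theta\ln\pi_t^i\Vert$ by $\nabla_\theta\tilde\pi_t$ to recover $\hat{\mathfrak{R}}_{\mathcal{T}_k}(\mathcal{F})$. The vector-to-scalar reduction you flag as the main obstacle is precisely the step the paper executes in a single inequality---replacing the vector weight $\nabla_\theta\ln\pi_t^i$ by its norm inside the signed Rademacher sum while applying the Lipschitz bound---without the coordinate-wise or vector-valued contraction justification you call for, so your concern is well placed but does not represent a divergence from the paper's argument.
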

\begin{proof}
See the proof in Appendix \ref{app_a}.
\end{proof}

Then, through a weighted sum of empirical risk \eqref{erm_risk} and the upper bound $R_{\mathcal{T}}(\mathcal{F})$ on complexity, we define the SRM problem as follows.

\begin{problem}[SRM-IRL Problem]
Given an observed demonstration $\mathcal{T}$ and a series of hypothesis reward function class $\{\mathcal{F}_j\}_{j=1}^C$, the optimal SRM solution for IRL is defined as
\begin{equation}\label{srm_eq}
r_{\mathcal{T}}^{SRM} = \mathop{\arg\min}_{1\leqslant j \leqslant C, r\in \mathcal{F}_j} J_{\mathcal{T}}(r,j),
\end{equation}
where
\[
J_{\mathcal{T}}(r,j) := \hat{\epsilon}_{\mathcal{T}} (r) + 
2 R_{\mathcal{T}}(\mathcal{F}_j).
\]
\end{problem}

\begin{theorem}\label{srm_the}
Given an observed demonstration $\mathcal{T}$ generated from $\mathcal{D}$ and a set of hypothesis reward function classes $\{\mathcal{F}_j\}_{j=1}^C$, for any $r \in \mathcal{F}_j$, we define the clipped version $\bar{r}$ and $\Vert\bar{r}\Vert\leqslant B$. Then, for $\delta \in (0,1]$, we provide the following error bounds in IRL model selection.\\
(i) Union bound: For all $\bar{r} \in \mathcal{\bar{F}}_j, 1\leqslant j \leqslant C$, there is at least $1-\delta$
probability such that 
\begin{equation}\label{app-bound}
\begin{aligned}
\epsilon_{\mathcal{D}}(\bar{r}) \leqslant & \hat{\epsilon}_{\mathcal{T}}(\bar{r}) + 2 R_{\mathcal{T}}(\bar{\mathcal{F}}_j) +\\ &3 LB \sum_{t=1}^{T} \sum_{k=t}^{T} \gamma^{k-t}  \Vert \nabla_{\theta} \ln{\pi_{t}}\Vert \sqrt{\frac{\log \frac{4}{\delta}}{2M}}.
\end{aligned}
\end{equation}
(ii) SRM learning bound: For the SRM solution defined by \eqref{srm_eq}, with the probability $1-\delta$,
\begin{equation}
\begin{aligned}
\epsilon_{\mathcal{D}}(\bar{r}_{\mathcal{T}}^{SRM}) \leqslant & \min_{r \in \mathcal{F}} \left( \epsilon_{\mathcal{D}}(\bar{r}) + 4 R_{\mathcal{T}}(\bar{\mathcal{F}}_{j(r)}) \right)+ \\
& 3 LB \sum_{t=1}^{T}  \sum_{k=t}^{T} \gamma^{k-t} \Vert \nabla_{\theta} \ln{\pi_{t}} \Vert \sqrt{\frac{\log \frac{4(C+1)}{\delta}}{2M}}
\end{aligned}
\end{equation}
holds, where $j(r)$ refers to the index of a hypothesis function class that $r \in \mathcal{F}_{j(r)}$. 
\end{theorem}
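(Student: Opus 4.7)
The plan is to prove part (i) first as the core uniform Rademacher generalization bound, then use it in the standard SRM sandwich argument to obtain part (ii). For part (i) I would begin by bounding the range of the clipped risk class $\bar{\mathcal L}^{irl}_j$ obtained from $\bar{\mathcal F}_j$: using Assumption \ref{l_ass} and the explicit form \eqref{es_grad} of $\nabla_\theta\hat J_\tau$, one has
\[
|l(\nabla_\theta\hat J_\tau(s_0,\bar r))| \le LB\sum_{t=1}^T\sum_{k=t}^T \gamma^{k-t}\|\nabla_\theta\ln\pi_t\| \;=:\; B_\pi ,
\]
which plays the role of the constant $B$ in Theorem \ref{rade_def}. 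Plugging $\bar{\mathcal L}^{irl}_j$ into Theorem \ref{rade_def} and replacing the empirical Rademacher term $\hat{\mathfrak R}_{\mathcal T}(\bar{\mathcal L}^{irl}_j)$ by $R_{\mathcal T}(\bar{\mathcal F}_j)$ via Lemma \ref{rade_l} yields, for each fixed $j$, the desired inequality uniformly over $\bar r\in\bar{\mathcal F}_j$; the constants in \eqref{app-bound} then come from a careful $\delta$-allocation that reserves part of the confidence budget for use in part (ii).

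For part (ii) I would follow the three-line SRM sandwich argument. Let $r^{SRM}_{\mathcal T}\in\mathcal F_{j^*}$ denote the minimizer of \eqref{srm_eq}. First, the bound of part (i) applied at $j=j^*$ yields, with high probability,
\[
\epsilon_{\mathcal D}(\bar r^{SRM}_{\mathcal T}) \le \hat\epsilon_{\mathcal T}(\bar r^{SRM}_{\mathcal T}) + 2R_{\mathcal T}(\bar{\mathcal F}_{j^*}) + E_1 ,
\]
where $E_1$ is the square-root concentration term. Second, the minimality of $r^{SRM}_{\mathcal T}$ for $J_{\mathcal T}$ gives the deterministic inequality
\[
\hat\epsilon_{\mathcal T}(\bar r^{SRM}_{\mathcal T}) + 2R_{\mathcal T}(\bar{\mathcal F}_{j^*}) \le \hat\epsilon_{\mathcal T}(\bar r) + 2R_{\mathcal T}(\bar{\mathcal F}_{j(r)})
\]
for every competitor $r\in\mathcal F$. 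Third, for that fixed $r$, a one-sided Hoeffding inequality applied to the bounded random variable $l(\nabla_\theta\hat J_{\tau^i}(s_0^i,\bar r))\in[0,B_\pi]$ produces $\hat\epsilon_{\mathcal T}(\bar r)\le \epsilon_{\mathcal D}(\bar r)+E_2$ with high probability. Chaining the three inequalities and taking the infimum over $r\in\mathcal F$ produces the oracle bound; the coefficient $4R_{\mathcal T}(\bar{\mathcal F}_{j(r)})$ appears as $2R_{\mathcal T}$ from part (i) plus $2R_{\mathcal T}$ from the SRM minimality step.

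The main technical obstacle is the simultaneous confidence accounting. Part (i) already burns a union bound over the $C$ hypothesis classes, and part (ii) needs one additional event — Hoeffding concentration at the fixed competitor $r$ — so $C+1$ events must hold jointly. Assigning each event failure probability $\delta/(C+1)$, and absorbing the two-sided McDiarmid step inherent in Theorem \ref{rade_def} into an extra factor of $2$, is precisely what makes the $\log(4(C+1)/\delta)$ factor appear in place of the $\log(4/\delta)$ factor of part (i). Beyond this bookkeeping, the only nontrivial structural input beyond Lemma \ref{rade_l} is ensuring that the Lipschitz property from Assumption \ref{l_ass} together with the clipping \eqref{clip} supply a range bound $B_\pi$ that is valid uniformly over every $\bar r\in\bigcup_j\bar{\mathcal F}_j$, which is exactly what allows the same $B_\pi$ to be used in both the Rademacher bound and the Hoeffding step.
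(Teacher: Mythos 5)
Your proposal matches the paper's proof in all essentials: part (i) is the same range bound $B_{\bar{\mathcal L}} = LB\sum_{t=1}^{T}\sum_{k=t}^{T}\gamma^{k-t}\Vert\nabla_\theta\ln\pi_t\Vert$ fed into Theorem \ref{rade_def} and Lemma \ref{rade_l}, and part (ii) is the same decomposition (uniform deviation over the $C$ classes, SRM minimality, one-sided concentration at the fixed competitor) with the same $4(C+1)$ union-bound accounting, merely written as a chain of high-probability inequalities rather than the paper's $\mathrm{P}(X_1+X_2>\varepsilon)\leqslant \mathrm{P}(X_1>\tfrac{\varepsilon}{2})+\mathrm{P}(X_2>\tfrac{\varepsilon}{2})$ split. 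The only quibble is that your chaining actually yields $2R_{\mathcal T}(\bar{\mathcal F}_{j(r)})$ rather than $4R_{\mathcal T}(\bar{\mathcal F}_{j(r)})$ (the paper's extra $2R$ arises from retaining the $-2\hat{\mathfrak R}$ term inside the second deviation probability), which is harmless since it implies the stated bound a fortiori.
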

\begin{proof}
See the proof in Appendix \ref{app_b}.
\end{proof}
The SRM learning bound in Theorem \ref{srm_the} demonstrates an optimality guarantee for the SRM hypothesis solution. 
The general algorithm is shown in Algorithm \ref{alg}. Note that to achieve the optimality under structural risk, we need to go through the candidate function models, while this extra computation cost is acceptable if we see this model selection as a pre-process for IRL. Especially when IRL algorithm involving iterative learning, an improper reward function can lead to much higher complexity and larger cost subsequently.
\begin{algorithm2e}\label{alg}
\SetAlgoLined
 \caption{SRM Algorithm for IRL with Unknown Reward Model}
    \KwIn{ 
    The demonstration from expert, $\mathcal{T}$; The hypothesis function classes, $\{\mathcal{F}_j\}_{j=1}^C$;}
    \KwOut{
      The optimal class index $j^*$; The reward function identification $r_{\mathcal{T}}^{SRM} \in \mathcal{F}_{j^*}$;}
    \For{$j=1$ to $C$}{Solve ERM problem \eqref{erm_pro} to obtain the optimal estimation $r_j^*$ in function class $\mathcal{F}_j$;\\
    Calculate the structural risk $J_{\mathcal{T}}^{SRM}(r_j^*,j)$ in \eqref{srm_eq};}
    Select the minimum risk and its corresponding class number $j^* = \mathop{\arg\min}_{1\leqslant j \leqslant C} J_{\mathcal{T}}^{SRM}(r_j^*,j) $;\\
\textbf{return} The optimal reward function estimation $r_{\mathcal{T}}^{SRM} = r^*_{j^*}$.
\end{algorithm2e}

\subsection{Linear Weighted Sum Case}
In this subsection, we treat the special linearly weighted feature-based sum reward function described as:
\begin{equation}\label{linear_sum_r}
r(s,a;\omega)=\sum_{p=1}^q \omega_p^T \phi_p(s,a),
\end{equation}
which is parameterized by $\omega =(\omega^T_1, \dots, \omega^T_q)^T$ with features $\phi_p$ mapping from $\mathcal{S} \times \mathcal{A}$ to $\mathbb{R}^{n_p}$. This is a quite general setting. Taking the classic LQR problem as an example, suppose the reward for $(s,a)$ pair is $r = - \sum_{p=1}^q (s^T Q_p s + a^T R_p a)$. Then, write it into the feature-based sum form. We have $\omega_p = (vec(Q)^T, vec(R)^T)$ and feature $\phi_p(s,a)=((s \otimes s)^T, (a \otimes a)^T)^T$. Traditional IRL is to identify the parameter $\omega$ with the given demonstration, assuming the features $\{\phi_p\}_{p=1}^q$ are known. 

We now utilize our proposed SRM scheme to analyze the IRL under this linear combination setting with unknown features. Supposing $C$ hypothesis reward function classes $\{\mathcal{F}_j\}_{j=1}^C$ with different sets of features $\{\phi_p^j\}_{p=1}^q$, we are going to obtain the optimal function class $\mathcal{F}_{j^*}$ and its corresponding optimal parameter estimation $\hat{\omega}^*$.
With this specific reward function form, we provide the following lemma establishing the explicit bound of Rademacher complexity bound for \eqref{linear_sum_r}.
\begin{lemma}\label{linear_rade} 
For the linear weighted sum case, assuming the parameter $\Vert \omega_p \Vert \leqslant B_{\omega}$ for all $p$ and $\Phi_p(k)$ is the bound for $\Vert \phi_p(s,a) \Vert$ on dataset $\mathcal{T}_k, k=1,\dots,T$, then we have
\begin{equation}
\hat{\mathfrak{R}}_{\mathcal{T}_k}(\mathcal{F}) \leqslant \frac{B_{\omega} }{\sqrt{M}} \sum_{p=1}^q \Phi_p(k).
\end{equation}
\end{lemma}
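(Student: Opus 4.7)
My plan is to reduce the bound to a standard linear-class Rademacher complexity computation, exploiting the fact that the reward decomposes additively across features and the constraint $\|\omega_p\|\leq B_\omega$ is imposed coordinate-wise on each block $\omega_p$.

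\textbf{Step 1: unfold the definition and split across features.} Starting from Definition \ref{rade} applied to $\mathcal{F}$ evaluated on $\mathcal{T}_k=\{(s_k^i,a_k^i)\}_{i=1}^M$, I would write
\begin{equation*}
\hat{\mathfrak{R}}_{\mathcal{T}_k}(\mathcal{F}) = \mathbb{E}_\sigma\!\left[\sup_{\|\omega_p\|\leq B_\omega,\,\forall p} \frac{1}{M}\sum_{i=1}^M \sigma_i \sum_{p=1}^q \omega_p^T \phi_p(s_k^i,a_k^i)\right].
\end{equation*}
Since the feasible set is a product of $q$ independent balls, the supremum over $\omega$ of the sum equals the sum of the individual suprema, so the right-hand side equals $\sum_{p=1}^q \mathbb{E}_\sigma\bigl[\sup_{\|\omega_p\|\leq B_\omega}\tfrac{1}{M}\,\omega_p^T v_p(\sigma)\bigr]$, where $v_p(\sigma):=\sum_{i=1}^M \sigma_i\,\phi_p(s_k^i,a_k^i)$.

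\textbf{Step 2: dualize each inner sup by Cauchy--Schwarz.} For each $p$, the inner supremum is attained by $\omega_p = B_\omega\, v_p(\sigma)/\|v_p(\sigma)\|$, giving the exact value $B_\omega\|v_p(\sigma)\|/M$. Hence
\begin{equation*}
\hat{\mathfrak{R}}_{\mathcal{T}_k}(\mathcal{F}) = \frac{B_\omega}{M}\sum_{p=1}^q \mathbb{E}_\sigma \left\| \sum_{i=1}^M \sigma_i\, \phi_p(s_k^i,a_k^i) \right\|.
\end{equation*}

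\textbf{Step 3: apply Jensen and the Rademacher variance identity.} Using $\mathbb{E}\|X\|\leq\sqrt{\mathbb{E}\|X\|^2}$ (Jensen's inequality for the concave map $x\mapsto\sqrt{x}$) and exploiting $\mathbb{E}[\sigma_i\sigma_j]=\delta_{ij}$, I get
\begin{equation*}
\mathbb{E}_\sigma\|v_p(\sigma)\|^2 = \sum_{i=1}^M \|\phi_p(s_k^i,a_k^i)\|^2 \leq M\,\Phi_p(k)^2,
\end{equation*}
so $\mathbb{E}_\sigma\|v_p(\sigma)\|\leq \sqrt{M}\,\Phi_p(k)$. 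Substituting back yields $\hat{\mathfrak{R}}_{\mathcal{T}_k}(\mathcal{F}) \leq \tfrac{B_\omega}{\sqrt{M}}\sum_{p=1}^q \Phi_p(k)$, which is exactly the claimed inequality.

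\textbf{Anticipated obstacle.} The argument is essentially the textbook bound for linear classes with a 2-norm ball constraint, so no step is technically hard. The only point requiring a little care is Step 1: the decomposition $\sup_\omega \sum_p (\cdot) = \sum_p \sup_{\omega_p}(\cdot)$ relies crucially on the constraint $\|\omega_p\|\leq B_\omega$ being imposed separately on each block rather than on the concatenated vector $\omega$; I would state this explicitly so the reader sees why no slack is introduced at that step, and then Steps 2 and 3 are tight up to Jensen's inequality.
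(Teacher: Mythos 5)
Your proposal is correct and follows essentially the same route as the paper's proof in Appendix C: both reduce the bound to $\sum_{p}\frac{B_\omega}{M}\,\mathbb{E}_\sigma\bigl\Vert\sum_{i}\sigma_i\,\phi_p(s_k^i,a_k^i)\bigr\Vert$ via Cauchy--Schwarz on each block $\omega_p$, and then finish with Jensen's inequality and the orthogonality $\mathbb{E}[\sigma_m\sigma_n]=\delta_{mn}$. The only (immaterial) difference is that you split the supremum over the product of balls into a sum of exactly evaluated suprema, whereas the paper upper-bounds the whole sum first and then maximizes each $\Vert\omega_p\Vert$; both yield the same intermediate expression and the same final bound.
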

\begin{proof}
See the proof in Appendix \ref{app_c}.
\end{proof}

Note that when considering different function class $\mathcal{F}_j$, there exist different minimum upper bounds $B^j_{\omega}$ for $\Vert \omega^j_p \Vert$. The following remark provides a uniform bound for all $j=1,\dots, C$.

\begin{remark}\label{omega_rem}
When considering the linear weighted form reward, the expected policy gradient equation ${\epsilon}_{\mathcal{D}}(r) = 0$ has a scalar ambiguity property, which suggests that all $\omega' = \alpha \omega, \alpha \in \mathbb{R}_{+}$ satisfy the equation. To eliminate this ambiguity and avoid the trivial zero solution during the ERM, we add a unit simplex constraint 
\begin{equation}\label{omega_cons}
\{\omega \geqslant 0, \, \Vert \omega \Vert_1 = 1\}.
\end{equation}
     Coincidentally, this constraint makes $\omega_p$ and reward $r$ bounded, which just suits the Rademacher complexity definition. Therefore, in this case we do not need to do the function clipping in \eqref{clip} and a uniform bound for all function classe $\{\mathcal{F}_j\}_{j=1}^C$ is $\hat{\mathfrak{R}}_{\mathcal{T}_k}(\mathcal{F}_j) \leqslant \frac{1 }{\sqrt{M}} \sum_{p=1}^q \Phi^j_p(k)$.
\end{remark}

Based on Lemma \ref{linear_rade} and the constraints in Remark \ref{omega_rem}, the SRM-IRL problem for linear weighted sum reward is described as
\begin{equation}\label{srm_linear}
\begin{aligned}
r_{\mathcal{T}}^{SRM} := & \mathop{\arg\min}_{1\leqslant j \leqslant C, r\in \mathcal{F}_j} \left( \hat{\epsilon}_{\mathcal{T}} (r) + \right. \\
& \left. 2 L \sum_{t=1}^{T} \sum_{k=t}^{T} \gamma^{k-t} \nabla_{\theta} \Tilde{\pi}_t \cdot \frac{\sum_{p=1}^{q_j} \Phi_p(k)}{\sqrt{M}} \right).
\end{aligned}
\end{equation}
Combined with Theorem \ref{srm_the}, we provide the SRM learning bound in linear weighted sum case.

\begin{theorem}
For the SRM solution defined by \eqref{srm_linear}, with at least $1-\delta$ probability,
\begin{equation}
\begin{aligned}
& \epsilon_{\mathcal{D}}({r}_{\mathcal{T}}^{SRM}) \leqslant  \min_{r \in \mathcal{F}} \left( \epsilon_{\mathcal{D}}({r}) + 4 L \sum_{t=1}^{T} \sum_{k=t}^{T} \gamma^{k-t} \nabla_{\theta} \Tilde{\pi}_t \cdot \frac{1 }{\sqrt{M}} \right. \\
& \left.  \sum_{p=1}^q \Phi^{j(r)}_p(k) \right) + 3 LB \sum_{t=1}^{T}  \sum_{k=t}^{T} \gamma^{k-t} \Vert \nabla_{\theta} \ln{\pi_{t}} \Vert \sqrt{\frac{\log \frac{4(C+1)}{\delta}}{2M}}
\end{aligned}
\end{equation}
holds, where $j(r)$ refers to the index of a hypothesis function class that $r \in \mathcal{F}_{j(r)}$.
\end{theorem}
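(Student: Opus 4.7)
The plan is to obtain this bound as a direct specialization of Theorem~\ref{srm_the}~(ii) by plugging in the explicit Rademacher complexity estimate of Lemma~\ref{linear_rade}. Starting from the generic SRM learning bound, the only quantity that depends on the choice of hypothesis class is $R_{\mathcal{T}}(\bar{\mathcal{F}}_{j(r)})$, which Lemma~\ref{rade_l} expresses as
\[
R_{\mathcal{T}}(\mathcal{F}_{j(r)}) = L \sum_{t=1}^{T} \sum_{k=t}^{T} \gamma^{k-t} \nabla_{\theta}\tilde{\pi}_t \cdot \hat{\mathfrak{R}}_{\mathcal{T}_k}(\mathcal{F}_{j(r)}).
\]
So the task reduces to substituting the per-step empirical Rademacher complexity $\hat{\mathfrak{R}}_{\mathcal{T}_k}(\mathcal{F}_{j(r)})$ by the explicit data-dependent quantity $\frac{B_\omega}{\sqrt{M}} \sum_{p=1}^{q_{j(r)}} \Phi_p^{j(r)}(k)$ from Lemma~\ref{linear_rade}.

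Next, I would invoke Remark~\ref{omega_rem} to discharge two technicalities at once. First, the unit simplex constraint $\omega \geqslant 0$, $\Vert \omega\Vert_1=1$ allows us to set the uniform parameter bound to $B_\omega = 1$ across all $j=1,\dots,C$, so the complexity bound in Lemma~\ref{linear_rade} applies with a common constant. Second, the same constraint automatically makes $r(s,a;\omega)$ bounded whenever the features are bounded on the observed data, so the clipping operation \eqref{clip} is unnecessary and we may identify $\bar r = r$ and $\bar{\mathcal{F}}_j = \mathcal{F}_j$. With these two observations, the substitution of the linear Rademacher bound directly into the $4R_{\mathcal{T}}(\bar{\mathcal{F}}_{j(r)})$ term of Theorem~\ref{srm_the}~(ii) produces exactly the $4L$ expression inside the minimum of the stated theorem.

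Finally, the additive penalty of order $O(1/\sqrt{M})$ carries over verbatim from Theorem~\ref{srm_the}~(ii), since its derivation (union bound plus McDiarmid-style concentration, as used to establish part~(i)) depends only on the uniform envelope $B$ of the reward and on the Lipschitz constant $L$ of $l(\cdot)$, neither of which is affected by the linear-sum specialization. Collecting the three pieces yields the inequality in the statement.

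The main obstacle I expect is not algebraic but conceptual: namely, being explicit that the simplex normalization of Remark~\ref{omega_rem} legitimately replaces the generic clipping in Theorem~\ref{srm_the} so that $\epsilon_{\mathcal{D}}(\bar r_{\mathcal{T}}^{SRM})$ can be written as $\epsilon_{\mathcal{D}}(r_{\mathcal{T}}^{SRM})$, and making sure that the same constant $B$ governing the bound on $r$ (and appearing in the $3LB$ term) is consistent with the one used in Lemma~\ref{linear_rade}. Once this bookkeeping is fixed, the remainder of the proof is pure substitution.
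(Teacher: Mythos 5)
Your proposal is correct and follows exactly the route the paper intends: the paper gives no separate proof for this theorem, stating only that it follows by combining Theorem~\ref{srm_the}~(ii) with the explicit complexity bound of Lemma~\ref{linear_rade} under the simplex normalization of Remark~\ref{omega_rem}, which is precisely your substitution argument. Your additional care about identifying $\bar r$ with $r$ (no clipping needed) and using $B_\omega=1$ uniformly is the right bookkeeping and is consistent with what the paper implicitly assumes.
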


\subsection{Candidate Function Models}
The candidate function classes for SRM-IRL framework are suggested to be nested, i.e., $\mathcal{F}_j \subset \mathcal{F}_{j+1}$ for $\forall j$ though our algorithm can deal with arbitrary candidate classes. One choice for the hierarchical function set is polynomial basis. We can have
\[
\mathcal{F}_j = \sum_{p=1}^j \omega_p \begin{pmatrix}
s^{\otimes p} \\ a^{\otimes p}
\end{pmatrix}
\]
where $s^{\otimes p} := s \otimes s \cdots \otimes s$ for total $p$ times. Another alternative option is to define function classes in the Gaussian Reproducing Kernel Hilbert Space (RKHS) to achieve a rich and flexible representation. For a Gaussian kernel $\kappa_{\sigma}(x,x') = \exp(-\sigma \Vert x-x'\Vert_2^2)$, define a function class with the restriction on its corresponding RKHS $\mathcal{H}_{\sigma}$, which is
\[
\mathcal{F}_j = \{f\in \mathcal{H}_{\sigma_j}: \Vert f \Vert_{\mathcal{H}_{\sigma_j}} \leqslant B_j\},
\]
where $B_j >0$ is a constant controlling the complexity. We can set the parameters $(\sigma_j,B_j)$ to satisfy $\mathcal{F}_j \subset \mathcal{F}_{j+1}$.

\section{Simulation Results}\label{sim_sec}
In this section we conduct IRL simulations on the LQR problem to illustrate the performance of the proposed SRM scheme. Consider a dynamic system
\[
s_{t+1} = A s_t + B a_t, ~ t=0,1,\dots,T-1,
\]
where $s_t,a_t \in \mathbb{R}^n$, $A,B$ are dynamics matrices satisfying the condition of controllability. The control policy $\pi$ is $a_t = k s_t$. We set the reward function as
\begin{equation}\label{sim_lqr}
r_t = - s_t^T Q s_t + a_t^T R a_t=- \omega^T \begin{pmatrix}
s \otimes s \\ a \otimes a
\end{pmatrix},
\end{equation}
where the weighting matrices $Q,R$ are positive definite. Let $T=50$ and discount $\gamma = 0.9$.

For the data collection, we first train the policy through REINFORCE with baseline shown in Fig. \ref{train_pic}. We collect $M$ trajectories after $60$ episodes with random initial states generated from a uniform distribution. Using these data as demonstration $\mathcal{T}$, ERM-IRL in Problem \ref{erm-irl} is to estimate the weighting parameter $\omega$ when the feature function model is known (as \eqref{sim_lqr}). Fig. \ref{erm_pic} illustrates the estimation result with respect to the dataset size (trajectory number). As the size of the dataset gets larger, the estimation error $\Vert \hat{\omega} - \omega \Vert$ decreases.

\begin{figure}[h]
	\centering
	\begin{minipage}[c]{0.235\textwidth}
		\centering
        \includegraphics[width=\textwidth]{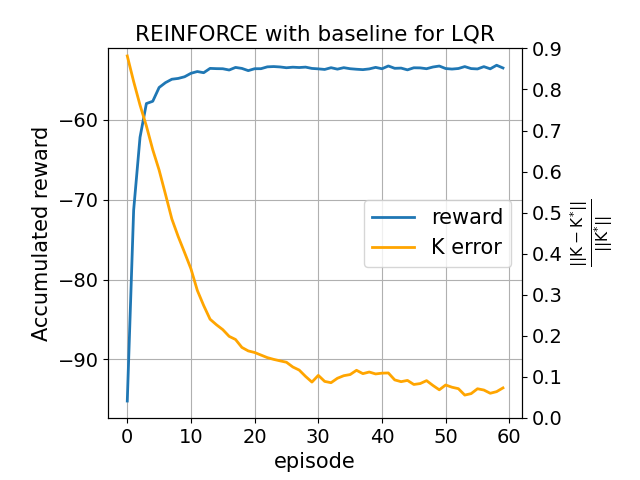}
		\caption{Policy gradient optimization for LQR. The demonstration is collected after 60 episodes.
  }
		\label{train_pic}
	\end{minipage} 
	\begin{minipage}[c]{0.235\textwidth}
		\centering
	\vspace{-15pt}	\includegraphics[width=\textwidth]{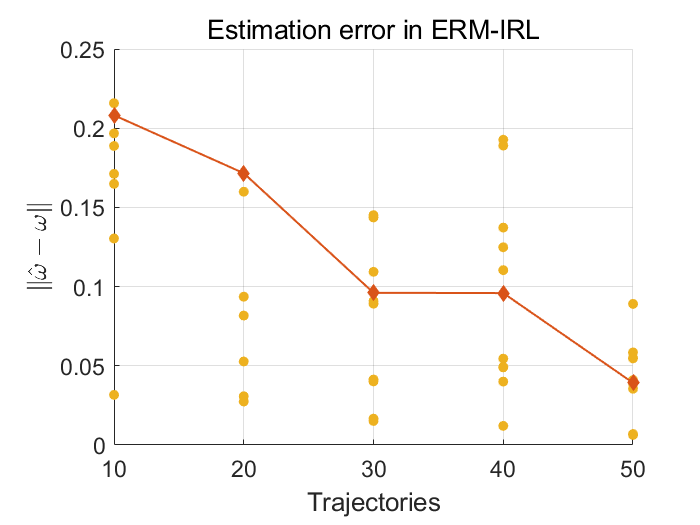}
	\caption{ERM-IRL estimation error.}
		\label{erm_pic}
	\end{minipage}
\end{figure}
To demonstrate the proposed SRM scheme, we choose $C=5$ hypothesis function classes based on polynomial basis listed as:
\begin{align*}
& \mathcal{F}_1 = \{\begin{pmatrix}
s \\ a
\end{pmatrix}\}, ~\mathcal{F}_2 = \{\mathcal{F}_1 \cup \begin{pmatrix}
s^{\otimes 2} \\ 0
\end{pmatrix}\}, ~ \mathcal{F}_3 = \{\mathcal{F}_1 \cup \begin{pmatrix}
s^{\otimes 2} \\ a^{\otimes 2}
\end{pmatrix}\}\\
& \mathcal{F}_4 = \{\mathcal{F}_3 \cup \begin{pmatrix}
s^{\otimes 3} \\ 0
\end{pmatrix}\}, ~\mathcal{F}_5 = \{\mathcal{F}_3 \cup \begin{pmatrix}
s^{\otimes 3} \\ a^{\otimes 3}
\end{pmatrix}\}.
\end{align*}
\noindent Note that the above hypothesis classes $\{\mathcal{F}_j\}_{j=1}^5$ are nested, i.e. $\mathcal{F}_j \subset \mathcal{F}_{j+1}$. We have ${\mathfrak{R}}_{M}(\mathcal{F}_j) < {\mathfrak{R}}_{M}(\mathcal{F}_{j+1})$ according to the monotonicity property of Rademacher complexity. The loss function $l(\cdot)$ is set to be the 2-norm of the gradient. We use $M=1000$ trajectories as the demonstration $\mathcal{T}$ to run the explicit optimization problem \eqref{srm_linear}.

\begin{figure}[ht]
\setlength{\abovecaptionskip}{0.1cm}
  \centering 
    \subfigure[Empirical risk and complexity penalty]{ 
    \label{lam1} 
    \includegraphics[height=0.25\textwidth,width=0.31\textwidth]{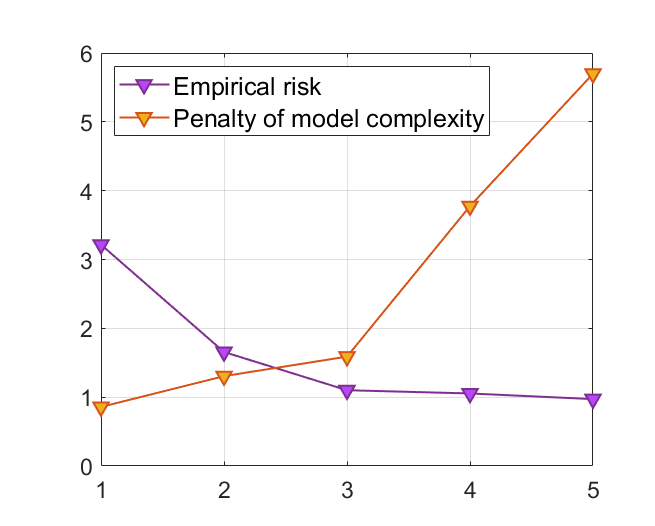} 
    }
    \subfigure[Structural risk (sum of two terms)]{ 
    \label{lam2} 
    \includegraphics[height=0.25\textwidth,width=0.31\textwidth]{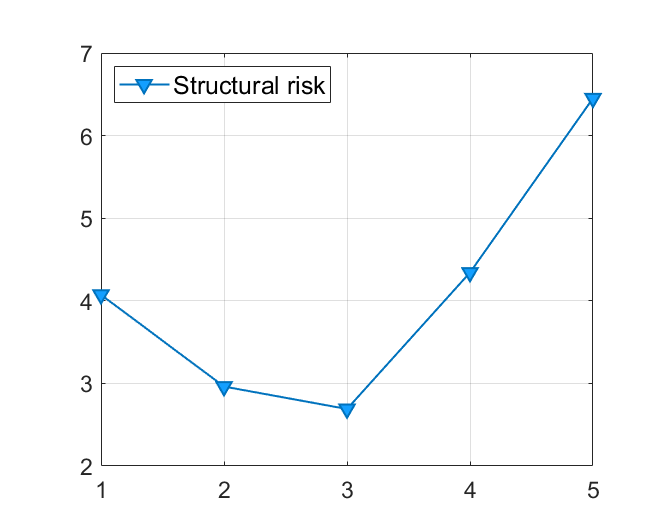} 
    }
    \subfigure[Statistics of 50 trials]{ 
    \label{lam3} 
    \includegraphics[height=0.25\textwidth,width=0.31\textwidth]{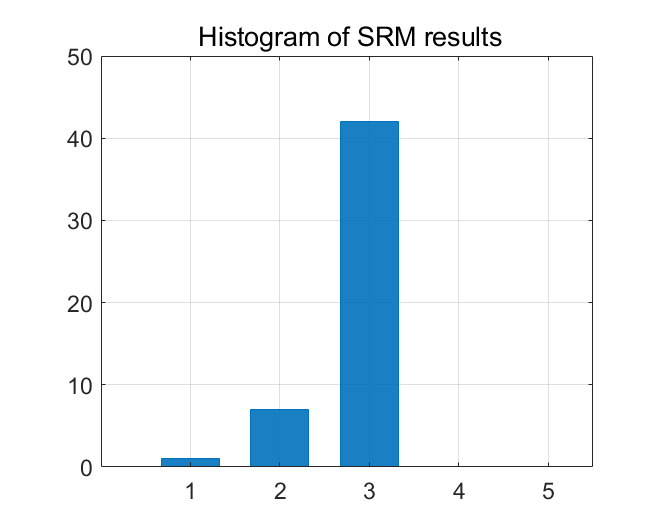} 
    }
  \caption{The SRM learning results with hypothesis function classes $\{\mathcal{F}_j\}_{j=1}^5$.} 
  \label{srm_linear_pic}
\end{figure}

Fig. \ref{srm_linear_pic} illustrates the mean results of $50$ experiments. The abscissa denotes the index of the function class $j$ from $1$ to $5$. We can find that as the model becomes complex, the empirical risk $\hat{\epsilon}_{\mathcal{T}}(r)$ decreases and after $j=3$, the real function model is contained in $\mathcal{F}_j$, thus the empirical risk changes little. Aligning with our intuitive, the penalty term for model complexity goes larger with $j$. Therefore, through adding these two terms together we obtain the optimal function class $j^*=3$ minimizing the structural risk in Fig. \ref{lam2}, which is consistent with \eqref{sim_lqr}. Since the SRM is data-dependent, different results may occur when noise exists. Fig. \ref{lam3} shows the statistics of $50$ trials and $j^*=1,2$ occurs with low probability. 

Notice that for a fixed function class, when the dataset size $M$ goes to infinity, the empirical risk will converge to the true error $\epsilon_{\mathcal{D}}(r)$, while the penalty term for model complexity will decrease as a speed of $\sqrt{M}$. This indicates different choices of $M$ lead to different optimal solutions. When $M$ is relatively small, the model is prone to overfitting. At the same time, the penalty term returns a high value, resulting in the SRM minimum being achieved with a simpler function class (small $j$), which effectively reduces the risk of overfitting. When the dataset is large, it is less likely to overfit, then our scheme will obtain more complex function class. 

\section{Conclusion}\label{conc_sec}
In this paper, an SRM scheme is provided for the model selection in IRL. For a series of hypothesis reward function classes, we utilize the policy gradient as the empirical risk and the upper bound on Rademacher complexity as the model penalty. Through minimizing the weighted sum of these two terms, we obtain the optimal reward identification, achieving a trade-off between the estimation error and generalization ability. 
We particularly analyze the linear weighted form in the simulation. This SRM scheme can also handle nonlinear hypothesis functions such as the kernel based representation.

\begin{appendices}
\section{Proof of Lemma \ref{rade_l}}\label{app_a}
According to Definition \ref{rade} and \eqref{erm_risk}, we have the empirical Rademacher complexity $\hat{\mathfrak{R}}_{\mathcal{T}}(\mathcal{L}^{irl})$ as
{\small{
\begin{align*}
&\mathbb{E}_{\sigma} \left[ \sup_{l^{irl}\in \mathcal{L}^{irl}} \frac{1}{M} \sum_{i=1}^M \sigma_i l^{irl}(s_0^i) \right] = \\
& \mathbb{E}_{\sigma} \left[ \sup_{r\in \mathcal{F}} \frac{1}{M} \sum_{i=1}^M \sigma_i l \left(\sum_{t=1}^{T} \nabla_{\theta} \ln{\pi^i_{t}} \sum_{k=t}^{T} \gamma^{k-t} r(s_{k}(s_0^i),a_{k}(s_0^i)) \right) \right]\\
& \leqslant L \mathbb{E}_{\sigma} \left[ \sup_{r\in \mathcal{F}} \frac{1}{M} \sum_{i=1}^M \sigma_i \sum_{t=1}^{T}  \sum_{k=t}^{T} \gamma^{k-t} \Vert \nabla_{\theta} \ln{\pi^i_{t}} \Vert r(s_{k}(s_0^i),a_{k}(s_0^i)) \right] \\
& \leqslant L \sum_{t=1}^{T} \sum_{k=t}^{T} \gamma^{k-t} \mathbb{E}_{\sigma} \left[ \sup_{r\in \mathcal{F}} \frac{1}{M} \sum_{i=1}^M \Vert\nabla_{\theta} \ln{\pi^i_{t}}\Vert \sigma_i r(s_{k}(s_0^i),a_{k}(s_0^i)) \right]\\
& \leqslant L \sum_{t=1}^{T} \sum_{k=t}^{T} \gamma^{k-t} \nabla_{\theta} \Tilde{\pi}_t \mathbb{E}_{\sigma} \left[ \sup_{r\in \mathcal{F}} \frac{1}{M} \sum_{i=1}^M  \sigma_i r(s_{k}(s_0^i),a_{k}(s_0^i)) \right]\\ &=  L \sum_{t=1}^{T} \sum_{k=t}^{T} \gamma^{k-t} \nabla_{\theta} \Tilde{\pi}_t \cdot \hat{\mathfrak{R}}_{\mathcal{T}_k}(\mathcal{F}),
\end{align*}
}}
where $\mathcal{T}_k = \{(s^i_k,a^i_k)\}_{i=1}^M$ and $\nabla_{\theta} \Tilde{\pi}_t = \max_{i} \Vert\nabla_{\theta} \ln{\pi^i_{t}}\Vert$.

\section{Proof of Theorem \ref{srm_the}}\label{app_b}
(i) Notice that the Rademacher complexity is defined on the bounded function class. We have the clipped version reward function $\Vert \bar{r} \Vert\leqslant B$. Since the loss function $l(\cdot)$ is a Lipschitz function with the Lipschitz constant $L$ and $l(0)=0$, we derive the bound on $\bar{\mathcal{L}}^{irl}$ as
{\small{
\begin{align*}
& \left| l \left(\sum_{t=1}^{T} \nabla_{\theta} \ln{\pi_{t}} \sum_{k=t}^{T} \gamma^{k-t} \bar{r}(s_k,a_k)) \right)\right| \\ &\leqslant L \Vert \sum_{t=1}^{T} \nabla_{\theta}\ln{\pi_{t}} \sum_{k=t}^{T} \gamma^{k-t} \bar{r}(s_k,a_k)) \Vert \\
& \leqslant L\sum_{t=1}^{T} \sum_{k=t}^{T} \Vert\nabla_{\theta}\ln{\pi_{t}} \cdot \gamma^{k-t}\Vert \cdot \Vert\bar{r}(s_k,a_k)\Vert \\
& \leqslant LB \sum_{t=1}^{T} \sum_{k=t}^{T} \gamma^{k-t}\Vert\nabla_{\theta}\ln{\pi_{t}}\Vert =: B_{\bar{\mathcal{L}}}.
\end{align*}
}}
Therefore, combining Theorem \ref{rade_def} and the bound on $\bar{\mathcal{L}}^{irl}_j$, for any $r \in \mathcal{F}_j$, we have
{\small{
\begin{align*}
&\epsilon_{\mathcal{D}}(\bar{r}) \leqslant \hat{\epsilon}_{\mathcal{T}}(\bar{r})+ 2 \hat{\mathfrak{R}}_{\mathcal{T}}(\bar{\mathcal{L}}^{irl}_j) + 3 B_{\bar{\mathcal{L}}} \sqrt{\frac{\log \frac{4}{\delta}}{2M}} \\
& \leqslant \hat{\epsilon}_{\mathcal{T}}(\bar{r}) + 2L \sum_{t=1}^{T} \sum_{k=t}^{T} \gamma^{k-t} \nabla_{\theta} \Tilde{\pi}_t \cdot \hat{\mathfrak{R}}_{\mathcal{T}_k}(\bar{\mathcal{F}}_j) + \\& 3 LB \left(\sum_{t=1}^{T}  \sum_{k=t}^{T} \gamma^{k-t} \Vert\nabla_{\theta}\ln{\pi_{t}}\Vert \right) \sqrt{\frac{\log \frac{4}{\delta}}{2M}} = \hat{\epsilon}_{\mathcal{T}}(\bar{r})+\\
& L \sum_{t=1}^{T} \sum_{k=t}^{T} \gamma^{k-t} \left(2 \nabla_{\theta} \Tilde{\pi}_t  \hat{\mathfrak{R}}_{\mathcal{T}_k}(\bar{\mathcal{F}}_j) + 3B \Vert\nabla_{\theta}\ln{\pi_{t}}\Vert \sqrt{\frac{\log \frac{4}{\delta}}{2 M}}\right).
\end{align*}
}}

(ii) To derive the SRM learning bound, we consider $\mathrm{P}(X_1+X_2 >\varepsilon) \leqslant \mathrm{P}(X_1 > \frac{\varepsilon}{2}) + \mathrm{P}(X_2 > \frac{\varepsilon}{2})$. Then we have
{\small{
\begin{equation}\nonumber
\begin{aligned}
& \mathrm{P}\left(\epsilon_{\mathcal{D}}(\bar{r}_{\mathcal{T}}^{SRM}) - \epsilon_{\mathcal{D}}(\bar{r}) - 4 \hat{\mathfrak{R}}_{\mathcal{T}}(\bar{\mathcal{L}}^{irl}_{j(r)}) > \varepsilon \right) \\
&\leqslant \mathrm{P} \left(\epsilon_{\mathcal{D}}(\bar{r}_{\mathcal{T}}^{SRM}) - \epsilon_{\mathcal{T}}(\bar{r}_{\mathcal{T}}^{SRM}) - 2 \hat{\mathfrak{R}}_{\mathcal{T}}(\bar{\mathcal{L}}^{irl}_{j({r}_{\mathcal{T}}^{SRM})}) > \frac{\varepsilon}{2}\right)  \\
&+ \mathrm{P} \left(\epsilon_{\mathcal{T}}(\bar{r}_{\mathcal{T}}^{SRM}) +2 \hat{\mathfrak{R}}_{\mathcal{T}}(\bar{\mathcal{L}}^{irl}_{j({r}_{\mathcal{T}}^{SRM})}) - \epsilon_{\mathcal{D}}(\bar{r}) -4 \hat{\mathfrak{R}}_{\mathcal{T}}(\bar{\mathcal{L}}^{irl}_{j(r)}) > \frac{\varepsilon}{2}\right) \\
& \leqslant \mathrm{P} \left( \sup_{r \in \mathcal{F}} (\epsilon_{\mathcal{D}}(\bar{r}) - \epsilon_{\mathcal{T}}(\bar{r}) - 2 \hat{\mathfrak{R}}_{\mathcal{T}}(\bar{\mathcal{L}}^{irl}_{j(r)})) > \frac{\varepsilon}{2}\right)  + \\
& ~~~~\mathrm{P} \left(\epsilon_{\mathcal{T}}(\bar{r}) - \epsilon_{\mathcal{D}}(\bar{r}) - 2 \hat{\mathfrak{R}}_{\mathcal{T}}(\bar{\mathcal{L}}^{irl}_{j(r)}) > \frac{\varepsilon}{2}\right),
\end{aligned}
\end{equation}
}}
where
{\small{
\begin{equation}\nonumber
\begin{aligned}
& \mathrm{P} ( \sup_{r \in \mathcal{F}} (\epsilon_{\mathcal{D}}(\bar{r}) - \epsilon_{\mathcal{T}}(\bar{r}) - 2 \hat{\mathfrak{R}}_{\mathcal{T}}(\bar{\mathcal{L}}^{irl}_{j(r)})) > \frac{\varepsilon}{2}) \\
&= \mathrm{P} ( \sup_{1\leqslant j \leqslant C} \sup_{r \in \mathcal{F}_j} (\epsilon_{\mathcal{D}}(\bar{r}) - \epsilon_{\mathcal{T}}(\bar{r}) - 2 \hat{\mathfrak{R}}_{\mathcal{T}}(\bar{\mathcal{L}}^{irl}_{j(r)})) > \frac{\varepsilon}{2}) \\
& \leqslant \sum_{j=1}^C \mathrm{P} ( \sup_{r \in \mathcal{F}_j} (\epsilon_{\mathcal{D}}(\bar{r}) - \epsilon_{\mathcal{T}}(\bar{r}) - 2 \hat{\mathfrak{R}}_{\mathcal{T}}(\bar{\mathcal{L}}^{irl}_{j(r)})) > \frac{\varepsilon}{2}) \\
& \leqslant 4C \exp\{-\frac{M \varepsilon^2}{18 B_{\bar{\mathcal{L}}}^2}\},
\end{aligned}
\end{equation}
}}
and
{\small{
\begin{equation}\nonumber
\mathrm{P} (\epsilon_{\mathcal{T}}(\bar{r}) - \epsilon_{\mathcal{D}}(\bar{r}) - 2 \hat{\mathfrak{R}}_{\mathcal{T}}(\bar{\mathcal{L}}^{irl}_{j(r)}) > \frac{\varepsilon}{2}) \leqslant 4 \exp\{-\frac{M \varepsilon^2}{18 B_{\bar{\mathcal{L}}}^2}\}.
\end{equation}
}}
Therefore, we derive
{\small{
\begin{equation}
\mathrm{P}(\epsilon_{\mathcal{D}}(\bar{r}_{\mathcal{T}}^{SRM}) - \epsilon_{\mathcal{D}}(\bar{r}) - 4 \hat{\mathfrak{R}}_{\mathcal{T}}(\bar{\mathcal{L}}^{irl}_{j(r)}) \!>\! \varepsilon) \leqslant 4 (C+1) \exp\{-\frac{M \varepsilon^2}{18 B_{\bar{\mathcal{L}}}^2}\}.
\end{equation}
}}
Set the right side equal to $\delta$, then (ii) has been proved.

\vspace{-5pt}

\section{Proof of Lemma \ref{linear_rade}}\label{app_c}
According to the Rademacher complexity defined in Definition \ref{rade}, we have
{\small{
\begin{align*}
& \hat{\mathfrak{R}}_{\mathcal{T}_k}(\mathcal{F}) = \mathbb{E}_{\sigma} \left[ \sup_{r\in \mathcal{F}} \frac{1}{M} \sum_{i=1}^M  \sigma_i r(s_{k}(s_0^i),a_{k}(s_0^i)) \right] \\ &= \mathbb{E}_{\sigma} \left[ \sup_{\Vert \omega_p \Vert \leqslant B_{\omega}} \frac{1}{M} \sum_{i=1}^M  \sigma_i \sum_{p=1}^q \omega^T_p \phi_p(s_{k}(s_0^i),a_{k}(s_0^i)) \right]\\
& 
\leqslant \frac{1}{M}\mathbb{E}_{\sigma} \left[ \sup_{\Vert \omega_p \Vert \leqslant B_{\omega}} \sum_{p=1}^q \Vert \omega^T_p \Vert \cdot \Vert \sum_{i=1}^M  \sigma_i \phi_p(s_{k}(s_0^i),a_{k}(s_0^i))\Vert \right]\\
& \leqslant \sum_{p=1}^q \frac{B_{\omega}}{M}\mathbb{E}_{\sigma}  \Vert \sum_{i=1}^M  \sigma_i \phi_p(s_{k}(s_0^i),a_{k}(s_0^i))\Vert  \\ & = \sum_{p=1}^q \frac{B_{\omega}}{M}\mathbb{E}_{\sigma} \sqrt{\sum_{m=1,n=1}^M  \sigma_m \sigma_n \phi^T_p(s^m_{k},a^m_{k}) \phi_p(s^n_{k},a^n_{k})}  \\
& \leqslant \sum_{p=1}^q \frac{B_{\omega}}{M} \sqrt{\sum_{m=1,n=1}^M  \phi^T_p(s^m_{k},a^m_{k}) \phi_p(s^n_{k},a^n_{k}) \mathbb{E}_{\sigma}\left[ \sigma_m \sigma_n  \right]} \\ & \leqslant \sum_{p=1}^q \frac{B_{\omega}}{M} \sqrt{\sum_{m=1,n=1}^M   \phi^T_p(s^m_{k},a^m_{k}) \phi_p(s^n_{k},a^n_{k})}\\
&\leqslant \sum_{p=1}^q \frac{B_{\omega}}{M} \sqrt{ M \Phi^2_p (k)} =  \frac{B_{\omega} }{\sqrt{M}} \sum_{p=1}^q \Phi_p(k),
\end{align*}
}}
where $\Phi_p(k)$ is the upper bound for $\Vert \phi_p(s,a) \Vert$ on $\mathcal{T}_k$.

\end{appendices}

\bibliographystyle{IEEEtran}
\bibliography{ref}

\end{document}